\theoremstyle{plain}
\newtheorem*{theorem*}{Theorem}
\newtheorem{theorem}{Theorem}
\newtheorem{lemma}[theorem]{Lemma}
\newcommand{\bc}{\boldsymbol{c}}
\newcommand{\bg}{\boldsymbol{g}}
\newcommand{\bx}{\boldsymbol{x}}
\newcommand{\bu}{\boldsymbol{u}}
\newcommand{\by}{\boldsymbol{y}}
\newcommand{\bz}{\boldsymbol{z}}
\newcommand{\bv}{\boldsymbol{v}}
\newcommand{\bpsi}{\boldsymbol{\psi}}
\newcommand{\dom}{\mathop{\mathrm{dom}}}
\newcommand{\field}[1]{\mathbb{#1}}
\newcommand{\R}{\field{R}}
\newcommand{\E}{\field{E}}
\newcommand{\btheta}{\boldsymbol{\theta}}
\newcommand{\norm}[1]{\left\|{#1}\right\|}
\DeclareMathOperator{\Wealth}{Wealth}
\title{Better Parameter-Free Stochastic Optimization with ODE Updates for Coin-Betting}
\author {
    Keyi Chen\textsuperscript{\rm 1}, John Langford\textsuperscript{\rm 2}, Francesco Orabona\textsuperscript{\rm 1}
}
\begin{document}

\maketitle

\begin{abstract}
Parameter-free stochastic gradient descent (PFSGD) algorithms do not require setting learning rates while achieving optimal theoretical performance. In practical applications, however, there remains an empirical gap between tuned stochastic gradient descent (SGD) and PFSGD. In this paper, we close the empirical gap with a new parameter-free algorithm based on continuous-time Coin-Betting on truncated models. The new update is derived through the solution of an Ordinary Differential Equation (ODE) and solved in a closed form. We show empirically that this new parameter-free algorithm outperforms algorithms with the ``best default'' learning rates and almost matches the performance of finely tuned baselines without anything to tune.
\end{abstract}

\section{Introduction}

Most machine learning algorithms require solving an optimization problem, $\min_{\bx \in
  \R^d} \ F(\bx)$. To solve this problem, first-order stochastic
optimization algorithms are the de-facto choice for machine learning
due to their speed across large datasets and simplicity. These Stochastic (sub)Gradient Descent (SGD) algorithms start from an initial solution $\bx_1$, iteratively update a vector $\bx_t$ moving in the negative direction of a stochastic (sub)gradient $\bg_t$ such that $\E[\bg_t]
\in \partial F(\bx_t)$: $\bx_{t+1}=\bx_t - \eta_t \bg_t$, where $\eta_t>0$ is the \emph{learning rate} or \emph{step size}. Learning rates are the big caveat of SGD.

How do we set the learning rate? Intuitively, the learning rates must become arbitrarily small to converge to the minimum of the function. This is clear considering minimizing the function $F(x)=|x-10|$ with SGD. In addition, the step size must be large enough that not too many updates are required to move from the initial to the optimal solution.

We can formalize the above intuitions with the standard convergence rate of SGD with constant step size $\eta$ after $T$ iterations with stochastic subgradients $\bg_t$ bounded by $1$ in $L_2$ norm~\citep{Zinkevich03}:
\begin{equation}
\label{eq:conv_sgd}
\E\left[F\left(\frac1T \sum_{t=1}^T \bx_t\right)\right] - F(\bx^\star)
\leq \frac{\|\bx_1-\bx^\star\|^2}{2\eta } + \frac{\eta T}{2}~.
\end{equation}
From the above, we have that the optimal worst-case step size is $\eta =\frac{\|\bx_1-\bx^\star\|_2}{\sqrt{T}}$ implying the optimal step size is inversely proportional to the square root of the number of iterations and proportional to the distance between the initial point and the optimal one $\bx^\star$. Unfortunately, we do not know in advance the distance from the initial point to the optimal solution nor we can expect to reliably estimate it---again, consider the function $F(x)=|x-10|$. This lack of information about $\|\bx_1-\bx^\star\|_2$ is the primary difficulty of choosing the learning rate in the stochastic setting.

From a practical point of view, this failure of the theory to provide a way to automatically set the learning rates means that most of the time the best way to achieve the best convergence rate is to treat the learning rate as a hyperparameter and exhaustively search for the best one. However, the computational cost of this search can be huge, basically multiplying the entire learning process by the number of different learning rates we have to try. 

However, a new class of \emph{parameter-free} algorithms has been recently proposed~\citep[e.g.][]{McMahanO14,Orabona14,Cutkosky16,OrabonaP16b,CutkoskyB17,OrabonaT17,FosterRS18,CutkoskyO18,Kotlowski19,KempkaKW19,cutkosky2019matrixfree,JunO19,MhammediK20,OrabonaP21}. These algorithms do not have a learning rate parameter at all, while achieving essentially the same theoretical convergence rate you would have obtained tuning the learning rate in \eqref{eq:conv_sgd}. The simplest parameter-free algorithm~\citep{OrabonaP16b} has an update rule of
\vspace{-4pt}
\begin{equation}
\label{eq:coin}
\bx_{t+1} = \frac{-\sum_{i=1}^t \bg_i}{L(1+t)} \left(1 - \sum_{i=1}^t \langle \bg_i, \bx_i \rangle\right)~,
\end{equation}
\vspace{-1pt}
where $L$ is the Lipschitz constant of $F$.
These algorithms basically promise to trade-off a bit of accuracy for the removal of tuning the learning rate. However, empirically they still have a big gap with tuned optimization algorithms.

\begin{figure}
\centering
\includegraphics[width=0.45\textwidth]{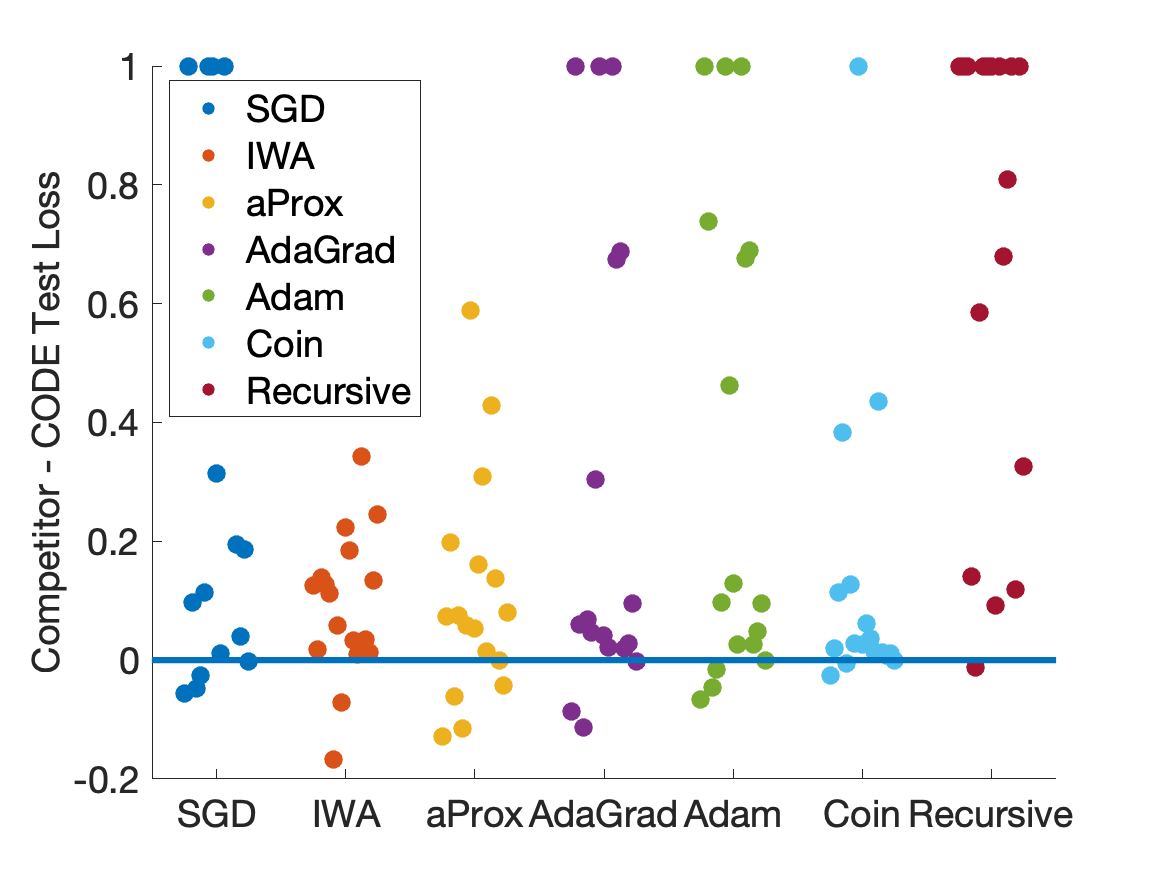}
\caption{Difference between the normalized competitor and CODE test losses on 17 regression datasets using ``best default'' parameters. Each point represents a dataset.  Points located above the line $y=0$ represent datasets on which CODE outperforms competitors.}
\label{fig:competitors}
\end{figure}

\textbf{Contributions.} In this paper, we greatly reduce this gap with a new class of parameter-free stochastic optimization algorithms that performs better than SGD, AdaGrad, and Adam with the ``best default'' parameter, see Figure~\ref{fig:competitors}. 
We achieve it by designing a parameter-free algorithm that is more aware of the geometry of the loss functions by avoiding overshooting their minima. In particular, we modify the optimization algorithm based on coin-betting in \eqref{eq:coin} to make an \emph{infinite number of infinitesimally small updates} on a \emph{truncated linear model for optimization}~\citep{AsiD19}. The final update is a closed form solution of an Ordinary Differential Equation (ODE), hence we call our algorithm CODE: Coin-betting ODE.

\textbf{Related Work.}
Parameter-free learning algorithms are discussed in Section~\ref{sec:def}.
The main inspiration here is the Importance Weight Aware updates~\citep{KarampatziakisL11} which make infinitely many infinitesimal SGD updates for each sampled loss. We provide two improvements: making the approach work for a parameter-free update rule and generalizing the set of losses. In particular, we consider any convex Lipschitz function instead of an expectation of losses of the form satisfying $\ell_t(\bx_t)=\ell(\langle \bz_t, \bx_t\rangle, y_t)$. Achieving closed-form solutions requires using a truncated linear model~\citep{AsiD19}.
The Importance Weight Aware updates~\citep{KarampatziakisL11} are also close to the ones in the truncated model from \citet{AsiD19}, perfectly coinciding in some cases. Both these approaches are also similar to Passive-Aggressive online learning algorithms~\citep{CrammerDKSSS06}, implicit updates~\citep{KivinenW97,KulisB10}, and the proximal point method~\citep{Rockafellar76}.

There is a long history relating ODE to optimization. The corresponding ODEs of a numerical optimization algorithm are established by taking infinitesimal step sizes so that the trajectory converges to a curve modeled by the ODE. The continuous trajectory provides new insights into numerical optimization such as the continuous-time interpretation of Nesterov's method~\citep{su2015differential} and the accelerated Mirror Descent inspired by continuous-time analysis~\citep{NIPS2015_5843}. This paper is the first attempt to combine a parameter-free algorithm with an ODE approach. The technique is general and we believe it points out an interesting new direction for research.

\textbf{Organization of the paper.} In Section~\ref{sec:def}, after introducing some definitions, we briefly review the theory behind parameter-free optimization algorithms through the simplified lens of coin-betting algorithms. Then, in Section~\ref{sec:ode} we introduce our algorithm CODE. Finally, in Section~\ref{sec:exp} we present empirical results and we conclude with a discussion and future work in Section~\ref{sec:conc}.

\section{Preliminaries}
\label{sec:def}

In this section, we introduce the needed mathematical background and the basic idea of parameter-free coin-betting optimization algorithms and truncated linear models.

\textbf{Notation.}
We denote vectors by bold letters and matrices by capital letters, e.g. $\bx \in \R^d$ and $A \in \R^{d\times m}$.
We denote by subscript $t$ a variable that changes in a discrete way, while using a function of $t$ for a variable that changes over time in a continuous way, e.g., $\bx_t$ and $\bx(t)$. $\boldsymbol{1}[E]$ denotes the indicator function of an event $E$, i.e., $\boldsymbol{1}[E]$ is equal to 1 if $E$ is true and 0 otherwise.

\textbf{Convex Analysis.}
We denote by $\norm{\cdot}$ the $L_2$ norm in $\R^d$. Let $f: \R^d \to \R
\cup \{\pm\infty\}$, the \emph{Fenchel conjugate} of $f$ is $f^\star:\R^d \to \R \cup \{\pm
\infty\}$ with $f^\star(\btheta) = \sup_{\bx \in
\R^d} \ \btheta^\top \bx - f(\bx)$.

A vector $\bx$ is a \emph{subgradient} of a convex function $f$ at $\bv$ if $f(\bv) - f (\bu) \leq  (\bv - \bu)^\top \bx$ for any $\bu \in \dom f$. The \emph{differential set} of $f$ at $\bv$, denoted by $\partial f(\bv)$, is the set of all the subgradients of $f$ at $\bv$. If $f$ is also differentiable at $\bv$, then $\partial f(\bv)$ contains a single vector, $\nabla f(\bv)$, which is the \emph{gradient} of $f$ at $\bv$.

\textbf{Betting on a coin.}
We describe here how to reduce subgradient descent to betting on the outcome of a binary event (i.e. a coin flip). This is not an exhaustive review of the topic---interested readers are referred to, e.g., \citet{OrabonaP16b} and \citet[Chapter 9]{Orabona19}.

We consider a gambler making repeated bets on the outcomes of adversarial coin flips. The gambler starts with \$1. In each round $t$, he bets on the outcome of a coin flip $c_t \in \{-1,1\}$, where $+1$ denotes heads and $-1$ denotes tails. We do not make any assumption on how $c_t$ is generated.

The gambler can bet any amount, although no additional money may be borrowed. We encode the gambler's bet in round $t$ by $x_t \in \R$, where $sign(x_t)$ encodes whether the bet is on heads or tails and $|x_t|$ encodes the betted amount.  When the bet succeeds, the gambler wins $x_tc_t$, otherwise, $x_tc_t$ is lost.
 We define
$\Wealth_t$ as the gambler's wealth at the end of round $t$, that is
\vspace{-10pt}
\begin{equation}
\label{equation:def_wealth_reward}
\Wealth_t = 1 + \sum_{i=1}^t x_i c_i~.
\end{equation}
\vspace{-6pt}

We enforce $x_t = \beta_t \Wealth_{t-1}$ for some betting fraction $\beta_t \in [-1,1]$ implying that the gambler cannot borrow money. We also slightly generalize the problem by allowing the outcome of the coin flip $\bc_t$ to be a vector in $\R^d$ with $L_2$ norm bounded by 1, with the definition of the wealth in~\eqref{equation:def_wealth_reward} generalized through inner products.

Now, we give a proof sketch of how it is possible to reduce optimization to a coin-betting algorithm.
Consider the function $F(x):=|x-10|$ and the optimization problem $\min_{x} \ F(x)$.

We set the outcome of the coin flip $c_t$ to be equal to the negative subgradient $g_t$ of $F$ in $x_t$, that is $c_t=-g_t \in \partial [-F(x_t)] \in \{-1, 1\}$, where $x_t$ is the bet.

Let's also assume that there exists a function $H(\cdot)$ such that our betting strategy guarantees that $\Wealth_T$ is at least $H(\sum_{t=1}^T c_t)$ for any arbitrary sequence $g_1, \cdots, g_T$.

We claim that the average of the bets, $F\left(\tfrac{1}{T}\sum_{t=1}^T x_t\right)$, converges to the minimum, $F(x^\star)$, with a rate dependent on how good our betting strategy is. In fact, we have
\vspace{-6pt}
\begin{align*}
F\left(\frac{1}{T} \sum_{t=1}^T x_t\right)& - F(x^\star) 
\leq \frac{1}{T} \sum_{t=1}^T F(x_t) - F(x^\star) \\
&\leq \frac{1}{T}\sum_{t=1}^T -g_t x^\star -\frac{1}{T}\sum_{t=1}^T -g_t x_t \\
&\leq \tfrac{1}{T}+\tfrac{1}{T}\left(\sum_{t=1}^T c_t x^\star  - H\left(\sum_{t=1}^T c_t\right)\right) \\
&\leq \tfrac{1}{T}+\tfrac{1}{T}\max_v \ v x^\star  - H(v) 
= \tfrac{H^\star(x^\star)+1}{T},
\end{align*}
\vspace{-2pt}
where in the first inequality we used Jensen's inequality, in the second the definition of subgradients, 
in the third our assumption on $H$, and in the last equality the definition of Fenchel conjugate of $H$.

In words, we can use any gambling algorithm to find the minimizer of a non-smooth objective function by accessing its subgradients. Note that the outlined approach works in any dimension for any convex objective function, even with stochastic subgradients~\citep{OrabonaP16b}. In particular, using the Krichevsky-Trofimov (KT) betting strategy that sets the signed betting fraction at time $t$ equal to $\beta_t = \tfrac{\sum_{i=1}^{t-1} c_t}{t}$~\citep{KrichevskyT81}, \citet{OrabonaP16b} obtained the optimal parameter-free algorithm in \eqref{eq:coin}.

\textbf{Truncated Linear Models.}
It is well-known that first-order optimization algorithms for convex functions can be thought as minimizing surrogate lower bounds to the objective function $F$, instead of directly minimizing it. That is, at step $t$ instead of trying to minimize $F(\bx)$, we minimize the first order approximation of $F$ in $\bx_t$, i.e., $F(\bx_t)+ \langle \bg_t, \bx-\bx_t\rangle$ where $\bg_t \in \partial F(\bx_t)$. Of course, this is a linear function with a minimum at infinity. Hence, we minimize the function \emph{constrained to a local neighborhood of $\bx_t$ w.r.t. the $L_2$ norm}. Using a Lagrangian multiplier, we obtain exactly the subgradient descent algorithm.

This view of subgradient descent as minimizing local lower bounds to the function immediately gives us a way to obtain new algorithms. For example, changing the metric of the constraint, we go from subgradient descent to Mirror Descent~\citep{BeckT03}. More recently, \citet{AsiD19} proposed to substitute the linear approximation with a \emph{truncated linear approximation}, that is
\begin{equation}
\label{eq:truncated}
\tilde{F}_t(\bx)=\max[F(\bx_t) + \langle \bg_t, \bx- \bx_t\rangle,F_{-}],
\end{equation}
where $F_{-}$ is a lower bound to the value of the function. For example, we can set $F_{-}$ to 0 if we know that the objective function is non-negative. Coupling this local lower bound with the constraint of staying not too far from $\bx_t$ in a $L_2$ sense, \citet{AsiD19} obtained a new class of optimization algorithms, aProx, that are more robust to the setting of the learning rate in low-noise regimes.

\section{ODE Updates for Coin-Betting Optimization}
\label{sec:ode}

In this section, we describe our Coin-betting ODE (CODE) algorithm.
The main idea is to discretize each update of the parameter-free coin betting algorithm \eqref{eq:coin} in an infinite series of infinitely small updates. Moreover, thanks to the use of the truncated linear model described in the previous section, we are still able to recover a closed-form solution for each update. 
It is worth noting that a direct application of the truncation method would require a mirror descent optimization algorithm, but betting does not belong to the mirror descent family. Indeed, there are no direct ways to merge truncation and betting techniques, other than the ODE's approach we propose in the following.

We proceed in stages towards the closed-form update. First, note that parameter-free algorithms require the stochastic gradients $\bg_t$ to be bounded, see lower bound in~\citep{CutkoskyB17}. For simplicity, we set this bound to be equal to 1.
Now, we introduce a straightforward improvement of the update rule \eqref{eq:coin}:
\vspace{-8pt}
\begin{equation}
\label{eq:coin2}
\bx_{t+1} = \tfrac{-\sum_{i=1}^t \bg_i}{1+\sum_{i=1}^t \boldsymbol{1}[\bg_i\neq\boldsymbol{0}]} \left(1 - \sum_{i=1}^t \langle \bg_i, \bx_i \rangle\right)~.
\end{equation}
\vspace{-2pt}
This update gives an improved convergence rate that depends on $O(T^{-1}\sqrt{\sum_{t=1}^T \boldsymbol{1}[\bg_t\neq\boldsymbol{0}]})$, where $\bg_t\in \partial F(\bx_t)$, rather than $O(T^{-1/2})$. We defer the proof of this result to the Appendix.

\begin{figure}
\centering
\includegraphics[width=0.44\textwidth]{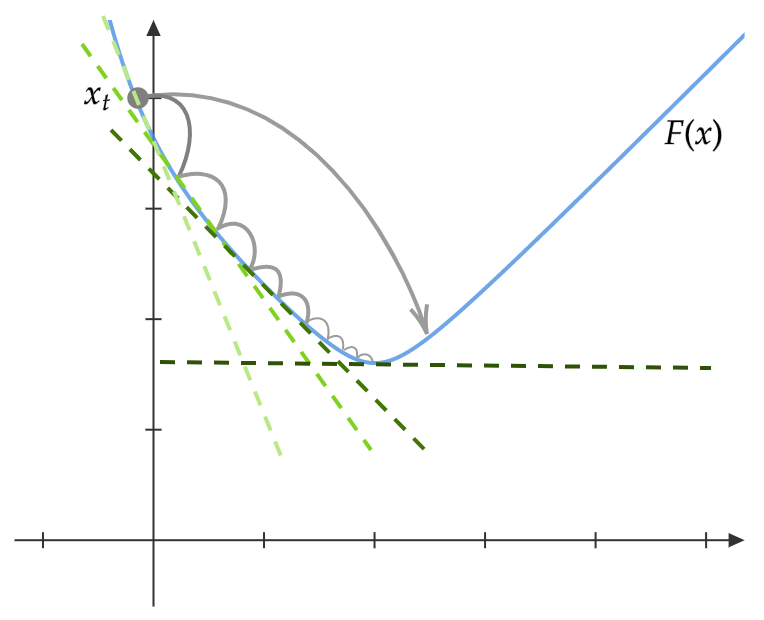}
\caption{ODE updates versus standard updates.}
\label{fig:trunc}
\end{figure}

Second, let's explain the idea behind having infinitely many steps. Consider an online convex optimization algorithm that at time $t$ predicts $\bx_t$, receives the subgradient, and then updates its prediction as $\bx_{t+1}$.  What if we instead make a step of 1/10th of the original size, we receive a subgradient, update again with 1/10 of the step, and so on for 10 times? In subgradient descent, the update in the first case can end up far from the minimum of the original objective function due to overshooting. With 10 updates of 1/10th size, we instead expect the optimization algorithm to land much closer to the minimum of the objective function using the informational advantage of 10 subgradients with 10 small steps. See Figure~\ref{fig:trunc} for a graphic representation of this idea. The exact same phenomenon occurs in coin-betting optimization algorithms. In CODE, by using truncated linear models, we get additional gradient information at the corner of the hinge. 

However, in the above example, it is clear that considering 10 updates with weights 1/10 is arbitrary. Indeed, we could push this reasoning to the limit and have $1/\delta$ updates over the losses $\delta \ell_t$ when $\delta$ goes to zero. In this case, the algorithm follows a \emph{continuous trajectory} rather than a discrete one.
While this reasoning is compelling, calculating the trajectory along the gradient flow is computationally difficult. For example, with 10 updates of 1/10th size, the algorithm will request 10 gradients.

A first idea to solve this problem was proposed by \citet{KarampatziakisL11}, considering only objective functions of the form $F(\bx)=\E_t[\ell_t(\bx)]$, where $\ell_t(\bx)=\ell(\langle \bz_t, \bx\rangle,y_t)$. In this special case, only the derivative of $\ell$ changes under updates, not the direction of the gradient. Taking advantage of this, it is possible to calculate the final $\bx_{t+1}$ without computing the entire trajectory with additional information at the truncated linear models' corners. In other words, we do not need additional gradients to derive $\bx_{t+1}$. 

So, the last key component is to use the truncated linear models described in Section~\ref{sec:def}. In particular, if we use the updates in \eqref{eq:coin2} on the function $\tilde{F}_t$ in \eqref{eq:truncated}, the subgradient of $\tilde{F}_t$ always has the same direction and it is zero when we are in the flat part of the function. Hence, using \eqref{eq:coin2} with infinitesimally small weights of the function $\tilde{F}_t$ the update could never go beyond the point where $\tilde{F}_t(\bx)=F_{-}$.
Moreover, a closed-form solution along the continuous trajectory becomes possible.

Now that we have all the pieces, we state the closed-form solution that gives the final update rule.

\begin{theorem}
\label{thm:main}
Set $\Wealth_1=1$ and let $F(\bx)$ be a 1-Lipschitz convex function. 
Define $\btheta_1=\boldsymbol{0} \in \R^d$, $H_1=1$, and $\bg_t \in \partial F(\bx_t)$ for $t=1, \dots, T$.
Then, the limit as $\delta$ approaches 0 of running \eqref{eq:coin2} over $1/\delta$ updates with the surrogate functions $\delta \tilde{F}_t$ gives the update rule $\bx_{t+1} = \bpsi(t,h_t)$, where
\vspace{-14pt}

\begin{align}
\label{eq:flow_x}
&\bpsi(t,h) := \\
&\tfrac{\Wealth_t e^{-\langle \bg_t,\btheta_{t} \rangle \ln\left(1+\frac{h}{H_{t}}\right)+\|\bg_{t}\|^2 \left(h+H_{t}\ln\frac{H_{t}}{H_{t}+h}\right)}}{H_t+h}\left(\btheta_t - h \bg_t\right),\\
&h_t:=\min(\tilde{h}_t,1) \nonumber,
\end{align}
\vspace{-5pt}
the following quantities are defined recursively as 
\begin{align*}
&\btheta_{t+1}:=\btheta_{t} - h_t \bg_t, \qquad H_{t+1} := H_{t} + h_{t},\\
&\Wealth_{t+1} := \\
&\qquad \Wealth_t e^{-\langle \bg_t,\btheta_{t} \rangle \ln\left(1+\frac{h_t}{H_{t}}\right)+\|\bg_{t}\|^2 \left(h_t+H_{t}\ln\frac{H_{t}}{H_{t}+h_t}\right)},
\end{align*}

\vspace{-6pt}
and $\tilde{h}_t$ is the zero of the function 
\vspace{-2pt}

\begin{equation}
\label{eq:phi}
\phi(h):=F(\bx_t)+\langle \bg_t, \bpsi(t,h)- \bx_t\rangle - F_{-}~.
\end{equation}
\end{theorem}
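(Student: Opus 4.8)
The plan is to realize the ``$1/\delta$ updates with surrogate $\delta\tilde{F}_t$'' as an explicit Euler discretization of a system of ODEs in the auxiliary variables $\btheta$, $H$, and $\Wealth$, solve that system in closed form, and then read off $\bx_{t+1}$ from the betting formula. First I would fix the macro-step $t$ and introduce a continuous progress variable $h$ that accumulates the importance weight $\delta$ of each sub-update. Reading \eqref{eq:coin2} as an importance-weighted rule (in the spirit of \citet{KarampatziakisL11}), a single sub-update on $\delta\tilde{F}_t$ contributes the scaled subgradient $\delta\bg_t$ to the running sum of negative gradients, increments the (now fractional) count in the denominator by $\delta$, and decreases the wealth by $\delta\inn{\bg_t}{\bx}$, while the bet is always $\bx=\tfrac{\Wealth}{H}\btheta$. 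Dividing each increment by $\delta$ and letting $\delta\to 0$ yields the system
\begin{equation}
\frac{d\btheta}{dh}=-\bg_t, \quad \frac{dH}{dh}=1, \quad \frac{d\Wealth}{dh}=-\inn{\bg_t}{\bx(h)}, \quad \bx(h)=\tfrac{\Wealth(h)}{H(h)}\btheta(h),
\end{equation}
with initial data $\btheta(0)=\btheta_t$, $H(0)=H_t$, $\Wealth(0)=\Wealth_t$, valid on the linear branch of $\tilde{F}_t$ where its subgradient equals $\bg_t$.

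The first two equations integrate immediately to $\btheta(h)=\btheta_t-h\bg_t$ and $H(h)=H_t+h$, matching the recursions in the statement. Substituting these together with $\bx(h)=\tfrac{\Wealth(h)}{H(h)}\btheta(h)$ into the wealth equation gives the scalar, separable linear ODE
\begin{equation}
\frac{d\Wealth}{\Wealth}=-\frac{\inn{\bg_t}{\btheta_t}-h\norm{\bg_t}^2}{H_t+h}\,dh.
\end{equation}
Integrating from $0$ to $h$ and splitting the right-hand side into the $\inn{\bg_t}{\btheta_t}$ term, which produces $\inn{\bg_t}{\btheta_t}\ln(1+h/H_t)$, and the $\norm{\bg_t}^2$ term, where $\int_0^h \tfrac{u}{H_t+u}\,du=h-H_t\ln\tfrac{H_t+h}{H_t}=h+H_t\ln\tfrac{H_t}{H_t+h}$, reproduces exactly the exponential factor defining $\Wealth(h)$ in the theorem. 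Plugging $\Wealth(h)$ and $\btheta(h)$ back into $\bx(h)=\tfrac{\Wealth(h)}{H_t+h}(\btheta_t-h\bg_t)$ yields $\bpsi(t,h)$, and evaluating at the stopping value $h=h_t$ gives the stated recursions for $\btheta_{t+1}$, $H_{t+1}$, $\Wealth_{t+1}$, since $\bx_{t+1}=\bpsi(t,h_t)=\tfrac{\Wealth_{t+1}}{H_{t+1}}\btheta_{t+1}$.

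It remains to determine where the flow stops. The subgradient of $\tilde{F}_t$ equals $\bg_t$ only while the truncated model is on its linear branch, i.e. while $F(\bx_t)+\inn{\bg_t}{\bx(h)-\bx_t}>F_{-}$; once this quantity reaches $F_{-}$ the subgradient becomes $\bzero$, all four derivatives vanish, and the state freezes. Hence the flow halts at the first zero $\tilde{h}_t$ of $\phi(h)=F(\bx_t)+\inn{\bg_t}{\bpsi(t,h)-\bx_t}-F_{-}$. Because the discretization performs only $1/\delta$ sub-steps, the progress variable is additionally capped at $h=1$, giving $h_t=\min(\tilde{h}_t,1)$.

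The main obstacle is rigor in the passage $\delta\to 0$: I must argue that the explicit Euler iterates of the coupled recursion converge to the solution of the ODE system, which is delicate because the bet $\bx(h)$ depends on the very quantity $\Wealth(h)$ being integrated, and because the truncation makes the governing vector field discontinuous (it switches from $\bg_t$ to $\bzero$ at the hinge). I would handle the coupling by noting that the system closes in the scalar $\Wealth$ once $\btheta(h)$ and $H(h)$ are known in closed form, so standard consistency and stability of forward Euler for the resulting Lipschitz scalar ODE gives convergence on any interval avoiding the hinge; and I would handle the switching by treating the dynamics as the linear-branch flow up to the crossing time $\tilde{h}_t$ and as constant thereafter, verifying that $\phi$ is monotone near its root so that the discrete crossing time converges to $\tilde{h}_t$.
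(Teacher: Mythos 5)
Your proposal is correct and takes essentially the same route as the paper's proof: both realize the $1/\delta$ mini-updates of Lemma~\ref{lemma:discr} as a continuous flow with $\btheta(h)=\btheta_t-h\bg_t$ and $H(h)=H_t+h$, reduce the wealth dynamics to the same scalar linear ODE driven by $P(h)=\frac{1}{H_t+h}\langle\bg_t,\btheta_t-h\bg_t\rangle$, compute the same integral $\int_0^h P(u)\,du=\langle\bg_t,\btheta_t\rangle\ln\left(1+\frac{h}{H_t}\right)-\|\bg_t\|^2\left(h+H_t\ln\frac{H_t}{H_t+h}\right)$, and stop the flow at the root of $\phi$ capped at $1$. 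The only (cosmetic) difference is that you integrate the homogeneous separable equation $\Wealth'(h)=-\Wealth(h)P(h)$ directly, whereas the paper sets $\Wealth(h)=\Wealth_t+w_t(h)$ and solves the equivalent inhomogeneous equation $w_t'+P\,w_t=-\Wealth_t P$ via an integrating factor and the initial condition $w_t(0)=0$; your explicit remarks on the freezing of the state at the hinge and on the limit $\delta\to 0$ make the same formal steps the paper performs, stated slightly more carefully.
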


Hence, to find the closed formula of the update, we only need to find the value of $h_t$ in each round. This depends on the zero of a one-dimensional function, hence it can be found with any standard algorithm, like bisection or the Newton algorithm.
Given the closed-form update, we can now state our CODE algorithm, Algorithm~\ref{alg:code}.
Note that, given the value of $h_t$, the computational complexity of each update is $O(d)$, like in SGD.

\begin{algorithm}[t]
\caption{Coin-betting ODE (CODE) Algorithm}
\label{alg:code}
\begin{algorithmic}[1]
\STATE Initialize: $\Wealth_0 = 1$, $H_1 = 1$, $\btheta_1=\boldsymbol{0} \in \R^d$
\FOR{$t = 1, \dots,T$}
\STATE Query point $\bx_{t} = \frac{\Wealth_{t}}{H_{t}}\btheta_{t}$
\STATE Receive $\bg_{t}$ such that $\E[\bg_t] \in \partial F(\bx_t)$, $\|\bg_{t}\|\leq 1$ 
\STATE Calculate $h_t=\min(1,\tilde{h}_t)$, where $\tilde{h}_t$ is the zero of the function $\phi$ in \eqref{eq:phi}
\STATE Update $\Wealth_{t+1} = \Wealth_{t} e^{-\langle \bg_t,\btheta_{t} \rangle \ln\left(1+\frac{h_t}{H_{t}}\right)+\|\bg_{t}\|^2 \left(h_t+H_{t}\ln\frac{H_{t}}{H_{t}+h_t}\right)}$
\STATE Update $H_{t+1}=H_{t}+h_t$
\STATE Update $\btheta_{t+1}=\btheta_{t} - h_t \bg_{t}$
\ENDFOR
\end{algorithmic}
\end{algorithm}

\paragraph{Proof of the update rule.}
To obtain the closed form solution in Theorem~\ref{thm:main}, we break each update of the coin-betting optimization algorithm into $1/\delta$ ``mini-updates'' over the truncated surrogate functions $\delta\tilde{F}_i$, then we take the limit for $\delta$ that goes to 0 and derive the final update rule in Theorem~\ref{thm:main}. This means that we have $1/\delta$ mini-updates between $\bx_t$ and $\bx_{t+1}$, that give us the points $\bx_{t+\delta}, \bx_{t+2 \delta}, \dots, \bx_{t+1-\delta}, \bx_{t+1}$. The following lemma fully characterizes these mini-updates. 

\begin{lemma}
\label{lemma:discr}
Assume that at time $t$ the Wealth of the coin-betting optimization algorithm is $\Wealth_t$, the number of non-zero subgradients received is $H_t$, and the sum of the previous weighted subgradients is $\btheta_t$. Receive the subgradient $\bg_t$, where $\|\bg_t\|\leq 1$. Then, breaking the update in \eqref{eq:coin2} in $1/\delta$ mini-updates with weights $\delta$ over the truncated linear model $\tilde{F}_t(\bx_t)$ is equivalent to the updates
\begin{align*}
&\Wealth_{t+1} = \Wealth_{t} + w_{t+1},\\ 
&\btheta_{t+1} = \btheta_t - \delta \bg_t \sum_{j=\delta}^{1-\delta} s_j, 
\qquad H_{t+1} &= H_t + \delta\sum_{j=\delta}^{1-\delta} s_j,
\end{align*}
where for any $j = 0, \delta, 2\delta, \dots, 1-\delta$, we define $w_t:=0$, $s_j := \boldsymbol{1}[\tilde{F}_t(\bx_{t+j}) \neq F_{-}]$, and $w_{t+j+\delta} := w_{t+j} - \delta s_j \langle \bg_t, \bx_{t+j}\rangle$.
\end{lemma}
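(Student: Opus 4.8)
The plan is to prove the lemma by directly unrolling the coin-betting recursion over the $1/\delta$ mini-losses $\delta \tilde{F}_t$ and telescoping the per-step increments. First I would record the state-to-iterate correspondence: the update \eqref{eq:coin2} can be written compactly as $\bx = \tfrac{\Wealth}{H}\btheta$, where $\btheta$ is the running sum of negated received subgradients, $H$ is the running count of nonzero-subgradient rounds, and $\Wealth = 1 - \sum \langle \bg, \bx\rangle$ is the wealth. This is exactly the bookkeeping maintained in Algorithm~\ref{alg:code} (line 3), so each intermediate iterate along the mini-update sequence is $\bx_{t+j} = \tfrac{\Wealth_{t+j}}{H_{t+j}}\btheta_{t+j}$, and the whole argument reduces to tracking how $(\Wealth, \btheta, H)$ evolve across the $1/\delta$ steps.

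The crucial structural observation is that the truncated model \eqref{eq:truncated} admits only two possible subgradients: its subgradient at any point $\bx$ is $\bg_t$ in the linear regime, where $F(\bx_t) + \langle \bg_t, \bx - \bx_t\rangle > F_{-}$, and $\bzero$ in the flat regime, where the value has been clipped to $F_{-}$. Hence the subgradient of the scaled mini-loss $\delta \tilde{F}_t$ at $\bx_{t+j}$ is exactly $\delta s_j \bg_t$, with $s_j = \boldsymbol{1}[\tilde{F}_t(\bx_{t+j}) \neq F_{-}]$ and the indicator resolving the single kink crossing by convention. Because every mini-step therefore adds only a scalar multiple of $\bg_t$ to $\btheta$, all intermediate $\btheta_{t+j}$, and hence all iterates $\bx_{t+j}$, remain in the plane spanned by $\btheta_t$ and $\bg_t$; this is the feature of the truncated model that keeps the direction fixed and makes the closed form of the next theorem attainable without querying new gradients.

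With the per-step subgradient in hand, I would apply the three coin-betting state updates to mini-step $j$, reading off the weight-$\delta$ form of \eqref{eq:coin2}: $\btheta_{t+j+\delta} = \btheta_{t+j} - \delta s_j \bg_t$ by subtracting the received subgradient; $H_{t+j+\delta} = H_{t+j} + \delta s_j$, the counter being incremented by the round weight $\delta$ precisely when the subgradient is nonzero; and the wealth increment $w_{t+j+\delta} = w_{t+j} - \delta s_j \langle \bg_t, \bx_{t+j}\rangle$ with $\Wealth_{t+j+\delta} = \Wealth_t + w_{t+j+\delta}$. Summing these increments over the $1/\delta$ mini-steps $j = 0, \delta, \dots, 1-\delta$ telescopes to the stated closed forms $\btheta_{t+1} = \btheta_t - \delta \bg_t \sum_j s_j$, $H_{t+1} = H_t + \delta \sum_j s_j$, and $\Wealth_{t+1} = \Wealth_t + w_{t+1}$.

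The step I expect to need the most care is the reinterpretation of the counter $H$ inside the \emph{weighted} coin-betting game: in the original discrete rule \eqref{eq:coin2} the denominator increments by the integer $\boldsymbol{1}[\bg \neq \bzero]$, whereas here each mini-round carries weight $\delta$, so a nonzero subgradient must contribute $\delta s_j$ rather than $1$. Justifying that this is the faithful weighted analogue, so that the $\delta \to 0$ limit in Theorem~\ref{thm:main} remains consistent with the underlying convergence guarantee, is the delicate part; the remaining telescoping is routine accounting, and the contribution of the lone kink crossing is negligible and absorbed into the $\boldsymbol{1}[\cdot]$ convention.
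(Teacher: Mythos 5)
Your proposal is correct and takes essentially the same route as the paper's proof: the crux in both is that the subgradient of $\tilde{F}_t$ can only take the two values $\bg_t$ and $\bzero$, with the indicator $s_j$ selecting between them (and resolving the kink), after which the stated updates are just \eqref{eq:coin2} applied with round weight $\delta$ and telescoped over the $1/\delta$ mini-steps. You make explicit the telescoping and the weighted increment $\delta s_j$ of the counter $H$, which the paper's one-line proof leaves implicit, but there is no substantive difference in approach.
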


\begin{proof}
We use the fact that $\tilde{F}_t$ is differentiable everywhere but in $\bx_t$ where we choose as subgradient the zero vector. So, overall the subgradient can only assume the value $\bg_t$ and $\boldsymbol{0}$.
In particular, the subgradient is the null vector iff $s_j=0$. This proves the updates of $w_t$ and $\btheta_t$, while $\btheta_t$ and $H_t$ are updated accordingly to the update rules in \eqref{eq:coin2}.
\end{proof}
\vspace{-2pt}

We now consider the case when $\delta$ goes to 0 and prove the Theorem~\ref{thm:main}.
 \vspace{-7pt}
\begin{proof}[Proof of Theorem~\ref{thm:main}]
We now consider the limit of the previous mini-updates in Lemma~\ref{lemma:discr} when $\delta$ goes to zero.
We immediately obtain that 
\vspace{-6pt}
\begin{align}
w_t'(j) 
&:= \lim\limits_{\delta\to 0} \frac{w_{t+j+\delta}-w_{t+j}}{\delta} \nonumber\\
&= \lim\limits_{\delta\to 0} -\frac{\delta \boldsymbol{1}[\tilde{F}_t(\bpsi(t,j)) \neq F_{-}] \langle \bg_t, \bpsi(t,j)\rangle}{\delta}\nonumber \\
&= -\boldsymbol{1}[\tilde{F}_t(\bpsi(t,j)) \neq F_{-}] \langle \bg_t, \bpsi(t,j)\rangle, \label{eq:deriv_w}
\end{align}
where $\bpsi$ is some function that represents the continuous trajectory of the iterates. In particular, $\bx_t=\bpsi(t,0)$.
Later, we will prove that $\bpsi$ coincides with the one we defined in \eqref{eq:flow_x}.
Also, defining $h_t$ as 
\vspace{-4pt}
\[h_t:=\lim\limits_{\delta\to 0} \ \delta \sum_{j=\delta}^{1-\delta} s_j 
= \int_{0}^1 \! \boldsymbol{1}[\tilde{F}_t(\bpsi(t,j)) \neq F_{-}] \, \mathrm{d}j,\]
we have
\vspace{-6pt}
\begin{align*}
&\lim\limits_{\delta\to 0} \ \btheta_t + \delta\sum_{j=\delta}^{1-\delta} s_j \bg_t = \btheta_t + h_t \bg_t,\\
&\lim\limits_{\delta\to 0} \ H_t+ \delta \sum_{j=\delta}^{1-\delta} s_j =H_t + h_t~.
\end{align*}
\vspace{-2pt}
Hence, using the above results in \eqref{eq:coin2}, we obtain the trajectory of $\bx_t$ to $\bx_{t+1}$ is described by
\[\bpsi(t,j)=\frac{\Wealth_t+w_t(j)}{H_t+j}(\btheta_j - j \bg_t).\]
Together with \eqref{eq:deriv_w}, this implies that $w'_t(j)=0$ for $j\geq h_t$, while for $j\leq h_t$ we have
\begin{equation}
\label{eq:w_prime}
w'_t(j)
=-\left\langle \bg_t,\frac{\Wealth_t+w_t(j)}{H_t+j}(\btheta_j - j \bg_t)\right\rangle~.
\end{equation}
To simplify the notation, denote by 
\[
P(j)=\frac{1}{H_t+j}\langle \bg_t,\btheta_t - j \bg_t\rangle
\]
and 
\[
Q(j)=-\frac{\Wealth_t}{H_t+j}\langle \bg_t,\btheta_t-j\bg_t\rangle~.
\] 
Note that $Q(j)= - \Wealth_j P(j)$.
Hence, we can rewrite \eqref{eq:w_prime} as $w'_t(j) + w_t(j)P(j) = Q(j)$.
Solving this first-order, linear, inhomogeneous ODE, we get
\begin{align*}
    w'_t(j)& e^{\int \! P(j)\,\mathrm{d}j} =\left[\int \! e^{\int \! P(j)\,\mathrm{d}j}Q(j)\,\mathrm{d}j+C\right]\\
    &=-\Wealth_t \int \! e^{\int \! P(j)\,\mathrm{d}j}d\left({\int \! P(j)\,\mathrm{d}j}\right)+C\\
    &=-\Wealth_t e^{\int \! P(j)\,\mathrm{d}j}+C,
\end{align*}
where $C$ is a constant. Next, we need to solve for $C$. Consider that
\begin{align*}
  &\int_{0}^{h_t} \! P(j)\,\mathrm{d}j
  =\int_{0}^{h_t} \! \frac{1}{H_t+j}\langle \bg_t,\btheta_t-j \bg_t\rangle \,\mathrm{d}j \\
  &=\langle \bg_t,\btheta_t\rangle \int_{0}^{h_t} \! \frac{1}{H_t+j} \,\mathrm{d}j - \|\bg_t\|^2\int_{0}^{h_t} \! \frac{j}{H_t+j} \,\mathrm{d}j\\
  &=\langle \bg_t,\btheta_t\rangle \ln(H_t+h_t) - \|\bg_t\|^2 (h_t-H_t\ln(H_t+h_t))~.
\end{align*}
Hence, we have 
\begin{equation}
\begin{aligned}
w_t(h_t)&=-\Wealth_t\\
 &+ C e^{-\langle \bg_t,\btheta_t\rangle \ln(H_t+h_t) + \|\bg_t\|^2 (h_t-H_t\ln(H_t+h_t))}.\nonumber
\end{aligned}
\end{equation}
Since $w_t(0)=0$, we have 
\[C=\Wealth_t e^{\langle \bg_t,\btheta_t\rangle \ln(H_t) + \|\bg_t\|^2 H_t\ln(H_t))}.\]
Finally, we have
\begin{align*}
    \frac{w_t(h_t)}{\Wealth_t}= e^{-\langle \bg_t,\btheta_{t} \rangle \ln\left(1+\frac{h_t}{H_{t}}\right)+\|\bg_{t}\|^2 \left(h_t+H_{t}\ln\frac{H_{t}}{H_{t}+h_t}\right)} - 1~.
\end{align*}
Using the fact that $\frac{\Wealth_{t+1}}{\Wealth_t}=1 + \frac{w_t(h_t)}{\Wealth_t}$, we have the closed form expression of the wealth. This also provides an expression of the evolution of $\bx_t$ to $\bx_{t+1}$, that coincides with $\bpsi(t,j)$ in \eqref{eq:flow_x}.
\end{proof}

\section{Empirical Evaluation}
\label{sec:exp}
Here, we compare CODE with SGD, SGD with truncated models (aProx)~\citep{AsiD19}, SGD with Importance Weight Aware updates (IWA)~\citep{KarampatziakisL11}, AdaGrad~\citep{DuchiHS11}, Adam~\citep{KingmaB15}, the coin-betting algorithm in \eqref{eq:coin} (Coin)~\citep{OrabonaP16b} and the recursive coin-betting algorithm (Recursive)~\citep{cutkosky2019matrixfree}.
For SGD, aProx and IWA, we use the optimal worst-case step size for stochastic convex optimization: $\eta_k = \eta_0 /\sqrt{k}$, and tune the initial step size $\eta_0$.
In the adaptive learning rate methods, AdaGrad and Adam, we tune the initial step size $\eta_0$. CODE, Coin and Recursive do not have learning rates.

\subsection{Train/Test on Real Datasets}
We test the ability of CODE to get a good generalization error.
Hence, we perform experiments with 21 different machine learning binary classification datasets and 17 regression datasets from the LIBSVM website~\citep{ChangL01} and OpenML\citep{OpenML2013}. We implement extensive experiments on a large number of datasets to verify the significance of our results. We pre-process the samples normalizing them to unit norm vectors.
We shuffle the data and use 70\% for training, 15\% for validation, and hold out 15\% for testing. Given the lack of a regularizer, all the algorithms pass on the training set once to avoid overfitting~\citep[see, e.g., Section 14.5.1][]{Shalev-ShwartzBD14}. We evaluate algorithms with 0-1 loss for classification tasks and absolute loss for regression tasks, normalizing the scores by the performance of the best constant predictor. In this way, each dataset is weighted equally independently by how hard it is. Otherwise, a single hard dataset would dominate the average loss.
All the experiments are repeated 3 times and we take the mean of the 3 repetitions. See Appendix for more details on datasets, experiments, and numerical values.

\begin{figure}
\centering
\includegraphics[width=0.44\textwidth]{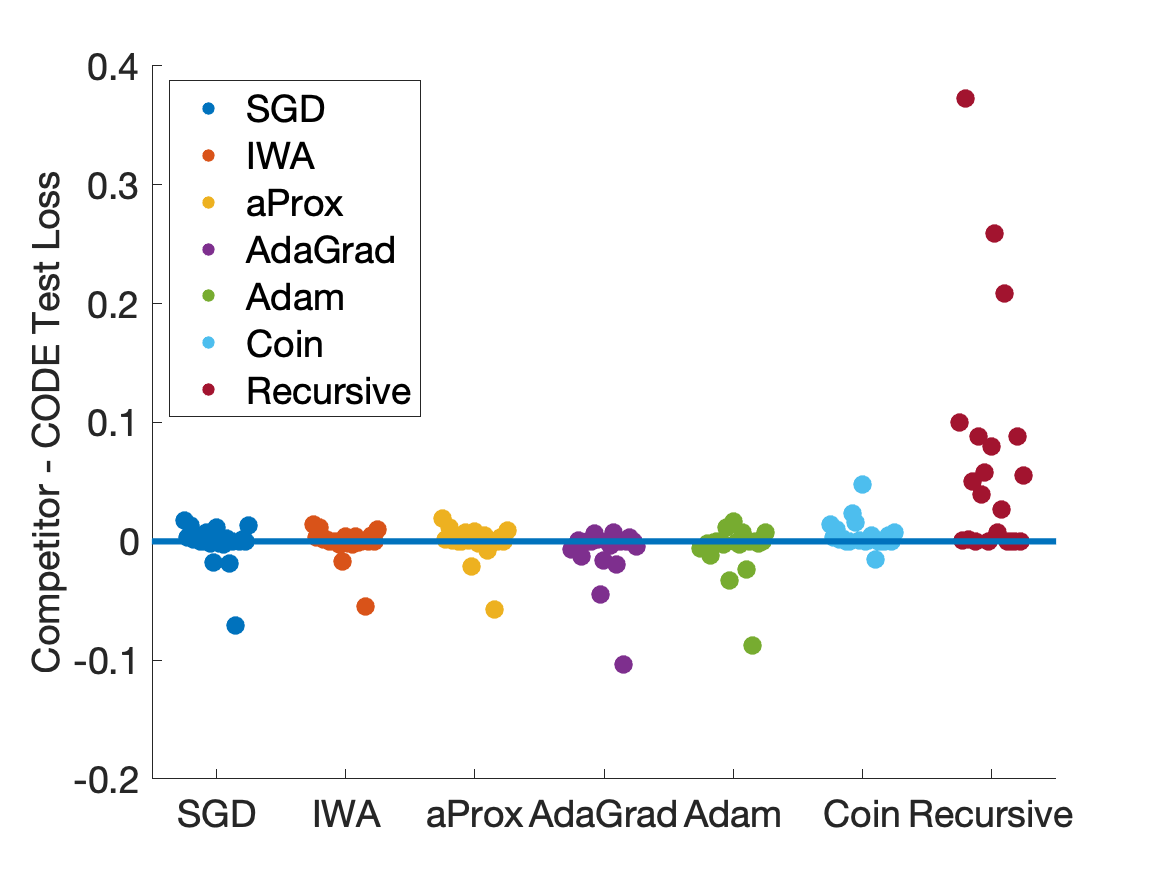}
\caption{Difference between normalized competitor and CODE test losses using the ``best default'' parameter on 21 classification datasets. }
\label{fig:competitors2}
\end{figure}

\vspace{-2pt}
\paragraph{Best Fixed Learning Rates.}

\begin{table}[t]
\centering
\begin{tabular}{lcc}
\toprule
 & \multicolumn{2}{c}{Single Learning Rate}  \\
\cmidrule{2-3}
Algorithm & \begin{tabular}{@{}c@{}}Mean normalized \\ absolute loss\end{tabular} & \begin{tabular}{@{}c@{}}Mean normalized \\ 0-1 loss\end{tabular} \\
\midrule
SGD     & 1.2116          & 0.1853         \\
IWA     & 0.8744          & 0.1861          \\
aProx   & 0.8901          & 0.1862          \\
AdaGrad & 1.1227          & \textbf{0.1778} \\
Adam    & 1.3319          & 0.1808          \\
Coin    & 0.9809          & 0.1930          \\
Recursive &10.1144		 &0.2556		 	\\
CODE    & \textbf{0.7825} & 0.1872          \\
\bottomrule
\end{tabular}
\caption{Average normalized test set accuracies on 17 regression and 21 classification datasets with best fixed learning rates.}
\label{table:table1}
\end{table}

\begin{table}[t]
\centering
\begin{tabular}{lcc}
\toprule
 & \multicolumn{2}{c}{One Learning Rate per Dataset} \\
\cmidrule{2-3}
Algorithm & \begin{tabular}{@{}c@{}}Mean normalized \\ absolute loss\end{tabular} & \begin{tabular}{@{}c@{}}Mean normalized \\ 0-1 loss\end{tabular}\\
\midrule
SGD     & 0.7276 & 0.1822\\
IWA     & 0.7196 & 0.1800\\
aProx   & 0.7284 & 0.1815\\
AdaGrad & \textbf{0.7085} & \textbf{0.1694}\\
Adam    & 0.7089 & 0.1704\\
Coin    & 0.9809 & 0.1930\\
Recursive &10.1144 &0.2556\\
CODE    &0.7825  & 0.1872 \\
\bottomrule
\end{tabular}
\caption{Average normalized test set accuracies on 17 regression and 21 classification datasets with tuned learning rates.}
\label{table:table2}
\end{table}

Clearly, the process of tuning hyperparameters is computationally expensive. For example, if we want to try 2 different learning rates, we have to run SGD twice per dataset. Hence, to have a fair comparison in terms of computational cost, here we consider this setting: we test the common belief that many optimization algorithms have a ``default'' learning rate that works on every dataset. If this were true, tuning would not be an extra cost.
To test this scenario, we tune the learning rate of the baselines to achieve the best \emph{average of normalized performance over all datasets directly on the test sets}. That is, we choose the ``best default'' parameter of each algorithm to minimize the numbers in Table~\ref{table:table1}. This is strictly better for the baselines than choosing some fixed default parameters for each of them.

First, we compare all the algorithms on linear regression problems with the absolute loss. We summarize the results in Figure~\ref{fig:competitors} in the Introduction and in Table~\ref{table:table1}. In the figure, each point represents a baseline algorithm (x-axis) vs. the normalized test loss difference between the algorithm and CODE (y-axis) on one dataset. So, points located above $y=0$ represent datasets where CODE outperforms the baseline algorithm. We can see that CODE on average is superior to all other algorithms. The mean of normalized absolute loss of SGD, AdaGrad, Adam, and Recursive is greater than 1, indicating that these baseline algorithms perform worse than the best constant predictor on average. The reason is clear: on these datasets, no single learning rate can work on all of them. Furthermore, CODE wins Coin by $\sim 0.1984$, which proves that the ODE updates boost the performance of the parameter-free algorithm. CODE also wins Recursive significantly. Overall, CODE essentially guarantees the best performance \emph{without any parameter tuning}. 

\begin{figure}[t]
\centering
\includegraphics[width=0.44\textwidth]{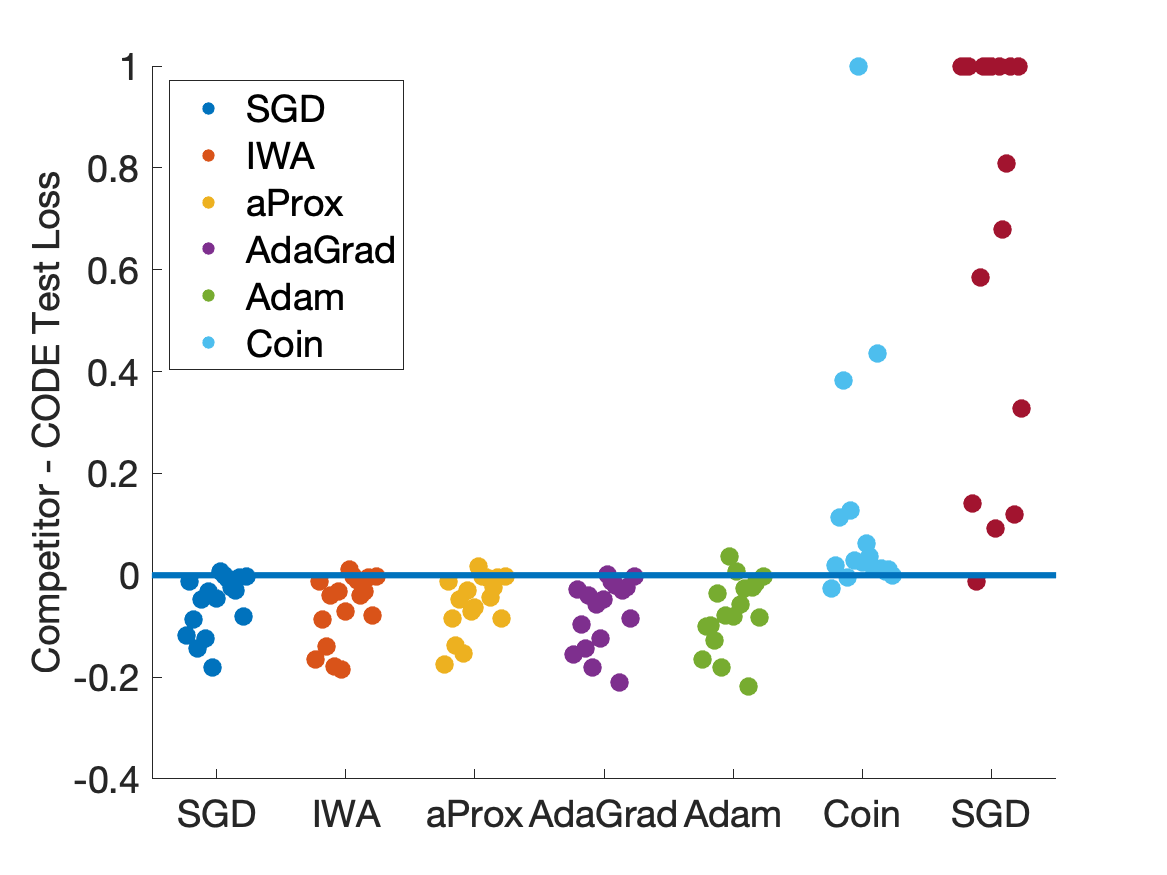} \hfill
\includegraphics[width=0.44\textwidth]{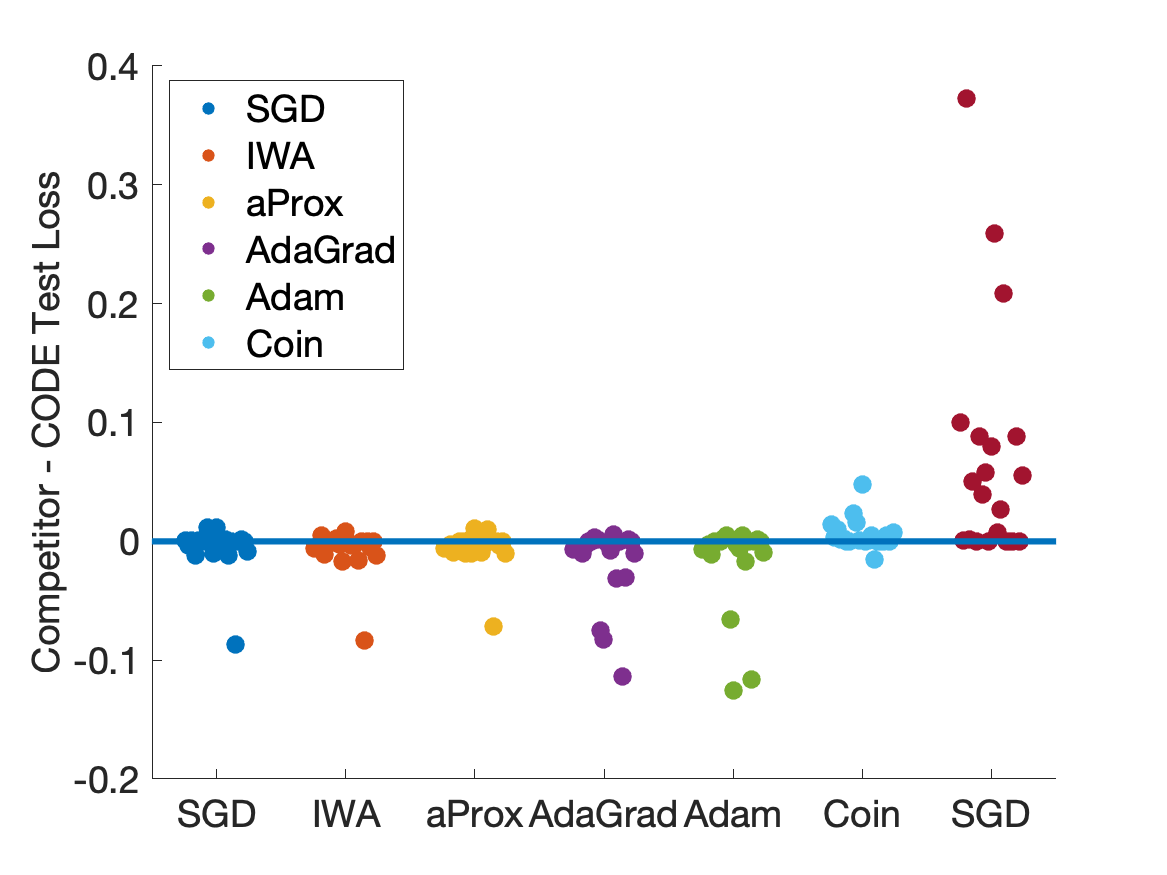}
\caption{Difference between normalized competitor and CODE test losses using a tuned learning rate on 17 regression (1) and 21 classification (2) datasets.}
\label{fig:competitors1}
\end{figure}

We also test the performance of all algorithms on classification problems. In Figure~\ref{fig:competitors2} and in Table~\ref{table:table1}, we can see that all algorithms except Recursive attained similar performance. CODE is worse than AdaGrad on average $\sim 0.0094$ and is better than Coin and Recursive. We performed two-sample paired t-test between CODE and competitors and failed to reject the null hypothesis that the performance of CODE on average is as good as the AdaGrad at the significant level $\alpha=0.05$. 

\paragraph{Tuned Learning Rates.}
We now turn to the case in which we ignore the computational complexity and we tune all the learning rates for SGD, IWA, aProx, AdaGrad, and Adam. For each repetition and dataset, we use the validation set to select the best learning rate, train using that learning rate, test on the test set and report the average of normalized loss. Results are summarized in Figure~\ref{fig:competitors1} and Table~\ref{table:table2}.

As just said, this is a very expensive procedure and not a fair comparison for parameter-free algorithms in terms of computational cost. Yet, in both regression and classification tasks, the performance of CODE and other algorithms except Recursive are close to each other. Remember the fact that CODE achieves this performance without any tuning. As a parameter-free algorithm, CODE only loses over the best algorithm on average $\sim 0.0740$ on regression problems and $\sim 0.0178$ on classification problems. The difference on average is not significant in statistics at the significance level $0.05$. We believe that there are many settings where such loss of accuracy would be negligible compared to what we gain from removing the need to tune the learning rate. It is also instructive to notice how much the performance of the baselines improves when we move from a ``default'' learning rate to a tuned one. In other words, to achieve the best optimal performance with, for example, AdaGrad the parameter tuning cannot be avoided, at least on regression problems.
   
\begin{figure}[t]
\centering
\includegraphics[width=.44\textwidth]{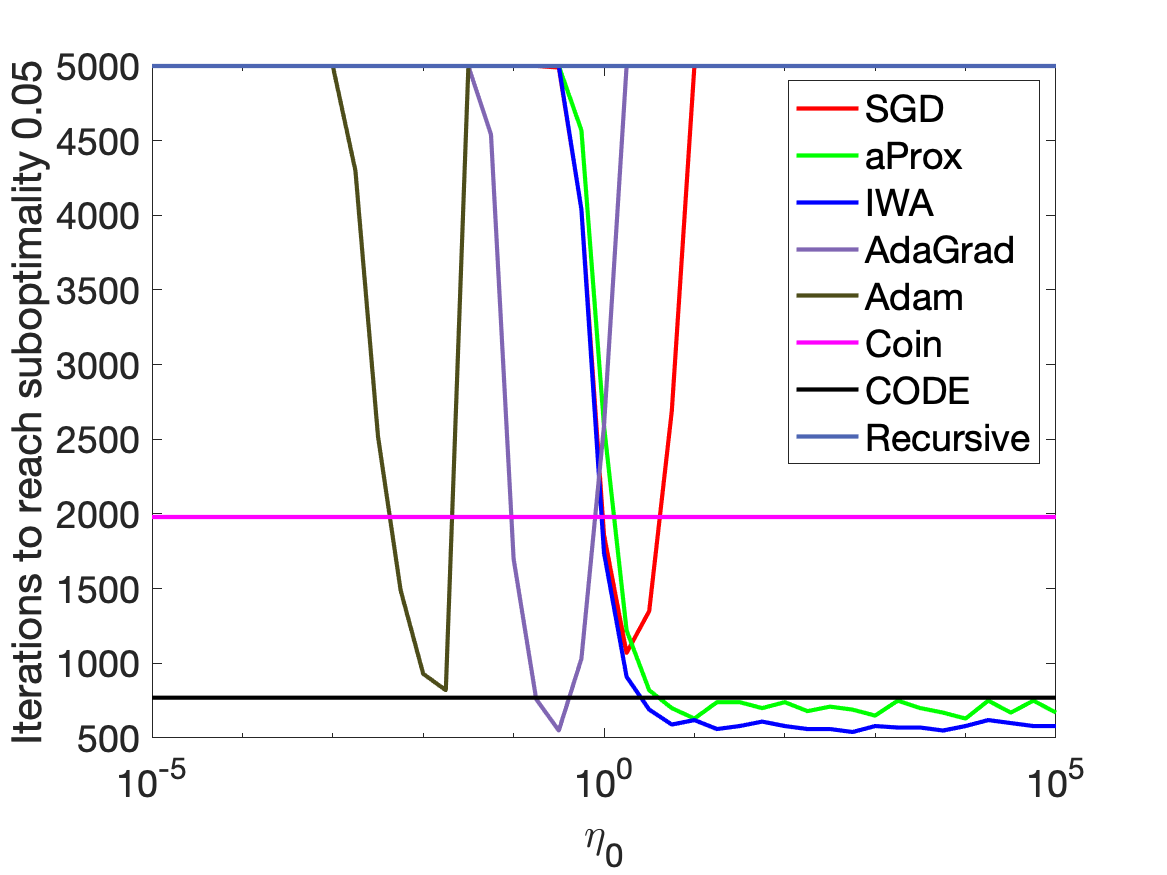} \hfill
\includegraphics[width=.44\textwidth]{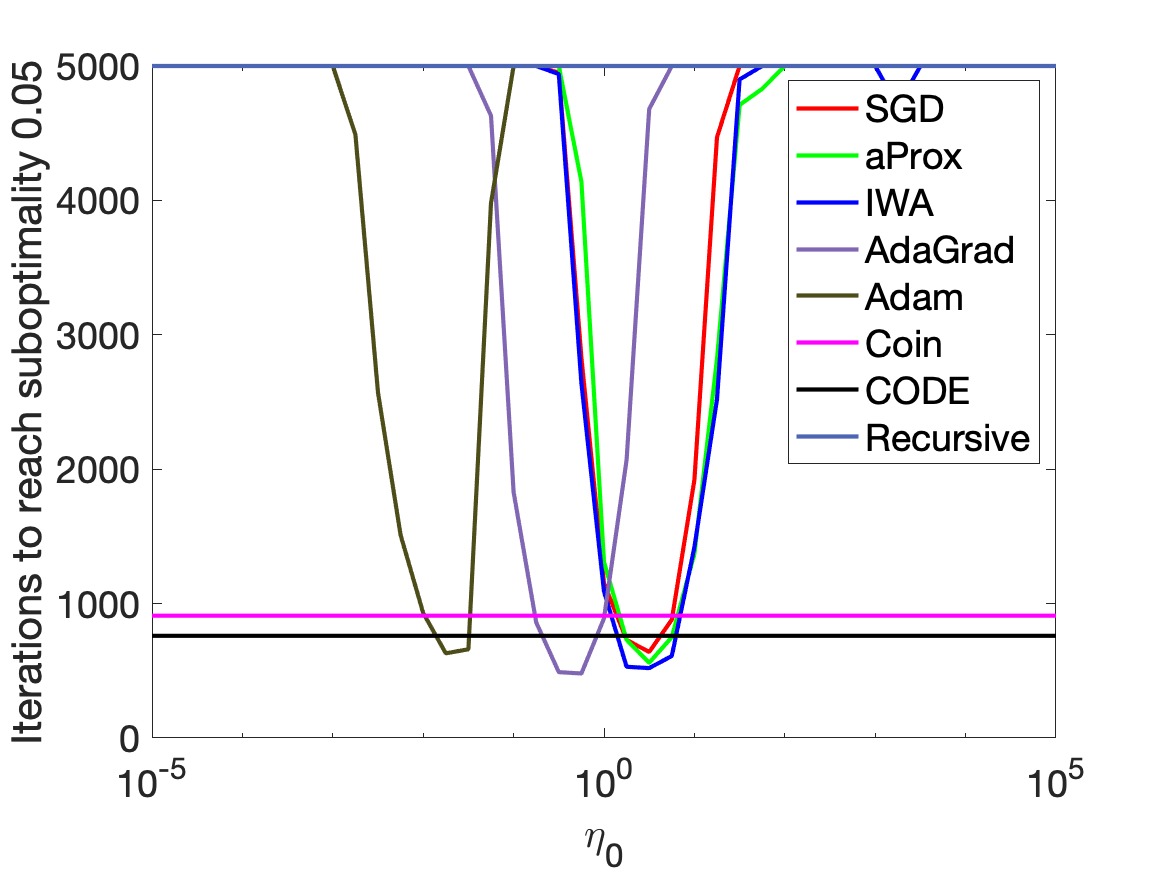}
\caption{Synthetic dataset with absolute loss. Number of iteration to reach $0.05$ suboptimality gap versus initial step sizes $\eta_0$. (1) noiseless setting, (2) $\sigma=1/2$.}
\label{fig:abs}
\end{figure}

\subsection{Sub-Optimality Gap on Synthetic Datasets}
We also generate synthetic data and test the algorithms following the protocol in \citet{AsiD19}, to observe the sensitivity of the algorithms to the setting of the step sizes. Of course, the parameter-free ones do not have any step size to set.
For $A\in \R^{m\times d}$ and $\by \in \R^{m \times 1}$ the objective function we want to minimize is $F(\bx)=\frac{1}{m}\|A \bx - \by\|_1$, which corresponds to a regression problem with the absolute loss. In each experiment, we generate $\bx^\star \sim N(0,I_d)\in \R^d$, and set $\by = A\bx^\star+\sigma \bv$ for $\bv \sim N(0,I_d)$, where $\sigma$ controls the amount of noise. We generate $A$ with uniformly random orthogonal columns, with $m=1000$ and $d=40$. Then, we normalize the $L_2$ norm of each sample. We repeat the above data generation process 10 times and show the average results in the plots. We also consider the classification setting, see similar experiments and plots in the Appendix.

As in \citet{AsiD19}, we study how many iterations are needed to reach a suboptimality gap of 0.05, that is we plot the smallest $k$ such that $F(\bx_k)-F(\bx^\star)\leq 0.05$. In Figure~\ref{fig:abs}, we show the results. As expected, the performance of SGD, Adam, and AdaGrad is extremely sensitive to the setting of the step size. We really need to find the right one, otherwise the convergence slows down catastrophically. Instead, IWA and aProx have a much better range of acceptable step sizes for the noise-free case. However, their advantage almost disappears in the noisy case. On the other hand, we can see that the parameter-free algorithms, CODE, perform very well, with \emph{CODE achieving essentially optimal performance in both regimes}. Moreover, CODE inherits the better performance of aProx in the noise-free case, gaining a big advantage over Coin, but still with a closed-form update. 

\section{Discussion}
\label{sec:conc}

We have presented a new parameter-free method called CODE, which is the first work combining the truncated linear approximation and continuous updates for Coin-Betting optimization algorithms. The empirical results show that CODE can outperform algorithms with a ``default'' learning rate and be very competitive with finely-tuned ones. In future work, we plan to investigate theoretical guarantees for CODE, possibly using our recent analysis of a parameter-free algorithm with an approximation of implicit updates~\citep{ChenCO22}.

\section*{Acknowledgements}
This material is based upon work supported by the National Science Foundation under grants no. 1925930 ``Collaborative Research: TRIPODS Institute for Optimization and Learning'', no. 1908111 ``AF: Small: Collaborative Research: New Representations for Learning Algorithms and Secure Computation'', and no. 2046096 ``CAREER: Parameter-free Optimization Algorithms for Machine Learning''.
\bibliography{learning}

\end{document}